\definecolor{bb}{rgb}{0.5, 0.0, 0.5} 
\definecolor{ss}{rgb}{0, 0, 1}
\newtheorem{proposition}{Proposition}
\newtheorem{proof}{Proof}
\DeclareMathOperator*{\argmax}{argmax}
\title{Adaptive Decision Boundary for Few-Shot Class-Incremental Learning}
\author{
    Linhao Li\textsuperscript{\rm 1}\footnote{Linhao Li and Yongzhang Tan are co-first authors.},
    Yongzhang Tan\textsuperscript{\rm 1}\footnotemark[1],
    Siyuan Yang\textsuperscript{\rm 2},
    Hao Cheng\textsuperscript{\rm 1}\footnote{Corresponding Author.},
    Yongfeng Dong\textsuperscript{\rm 1},
    Liang Yang\textsuperscript{\rm 1}
}
\begin{document}

\maketitle

\begin{abstract}
Few-Shot Class-Incremental Learning (FSCIL) aims to continuously learn new classes from a limited set of training samples without forgetting knowledge of previously learned classes.
Conventional FSCIL methods typically build a robust feature extractor during the base training session with abundant training samples and subsequently freeze this extractor, only fine-tuning the classifier in subsequent incremental phases.
However, current strategies primarily focus on preventing catastrophic forgetting, considering only the relationship between novel and base classes, without paying attention to the specific decision spaces of each class.
To address this challenge, we propose a plug-and-play \textit{Adaptive Decision Boundary Strategy} (ADBS), which is compatible with most FSCIL methods.
Specifically, we assign a specific decision boundary to each class and adaptively adjust these boundaries during training to optimally refine the decision spaces for the classes in each session.
Furthermore, to amplify the distinctiveness between classes, we employ a novel inter-class constraint loss that optimizes the decision boundaries and prototypes for each class.
Extensive experiments on three benchmarks, namely CIFAR100, miniImageNet, and CUB200, demonstrate that incorporating our ADBS method with existing FSCIL techniques significantly improves performance, achieving overall state-of-the-art results.
\end{abstract}

%
\begin{links}
    \link{Code}{https://github.com/Yongzhang-Tan/ADBS}
\end{links}

\begin{figure}[!ht]
\centering
\includegraphics[width=\linewidth]{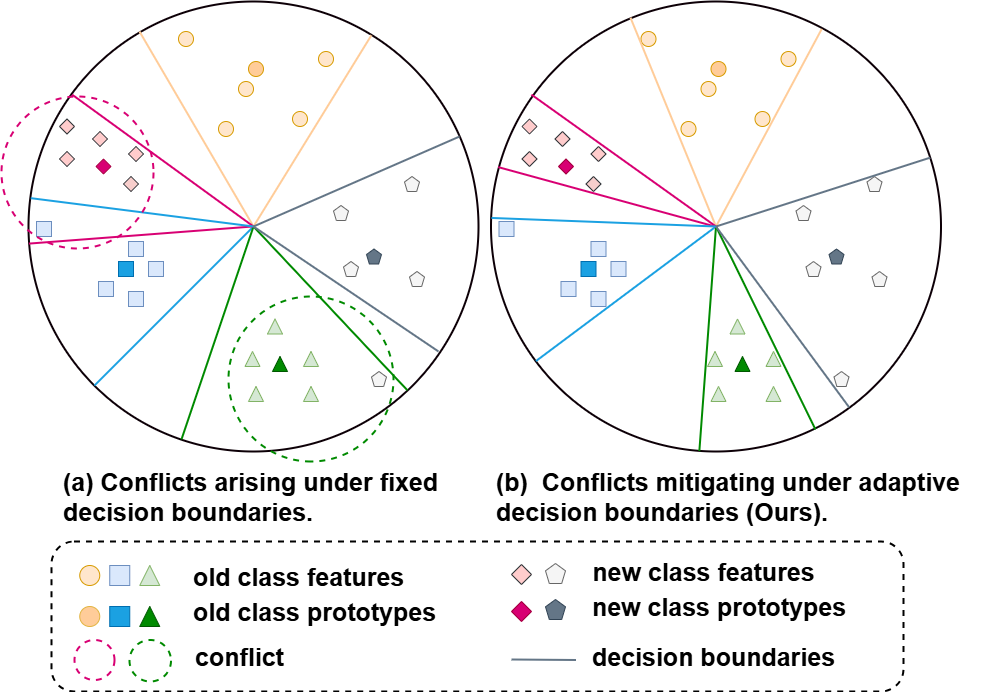}
\caption{
Illustration of FSCIL classification with (a) Fixed and (b) Our proposed adaptive decision boundaries. A fixed decision boundary strategy often struggles to reserve adequate space for new classes at each incremental stage, resulting in space conflicts between old and new classes. In contrast, our proposed adaptive decision boundary strategy can effectively alleviate this issue by adjusting the decision boundaries of both old and new classes.
}
\label{fig:conflict}
\end{figure}

\section{Introduction}

To date, deep Convolutional Neural Networks (CNNs) have achieved significant advancements in the field of computer vision, primarily using models trained on static and pre-collected large-scale annotated datasets~\cite{deng2009imagenet,he2016deep}. 
However, the practical challenge of gradually integrating data from new classes into a model initially trained on existing classes introduces significant obstacles, known as Class Incremental Learning (CIL)~\cite{hou2019learning,li2017learning,rebuffi2017icarl,cheng2025stsp}.
Unlike standard classification tasks, CIL requires handling new classes and restricted access to prior task data in incremental sessions.
Consequently, simply updating the model with new class data may lead to overfitting on these new classes and cause significant performance drops on previously learned classes, a phenomenon known as catastrophic forgetting~\cite{mccloskey1989catastrophic, masana2022class}.
Nonetheless, CIL requires sufficient training data from novel classes. In many applications, data collection and labeling can be prohibitively expensive, posing challenges for implementing CIL in real-world scenarios.
To address these challenges, Few-Shot Class-Incremental Learning (FSCIL) is proposed to address the dual challenges of learning new classes from limited examples and integrating this knowledge into an existing model without forgetting previously acquired information~\cite{tao2020few}.
Critically, the model must maintain a balance between stability and plasticity, preserving previously acquired knowledge while seamlessly integrating new information.

Recent advancements in FSCIL methods~\cite{zhang2021few,cheraghian2021semantic,dong2021few,zhao2021mgsvf,zhou2022forward,chen2022multi,yang2023neural,fan2024dynamic} have shown remarkable performance in image classification tasks.
These methods typically employ a learning pipeline that includes pre-training the model during a data-rich base session, followed by merely constructing a classifier for novel classes on the frozen backbone during incremental sessions.
While this approach effectively integrates new classes while retaining previously learned information, it still remains a significant challenge, \textit{i.e.}, conflicts in the decision space between old and novel classes, as depicted in Figure~\ref{fig:conflict}(a).
When the features of base classes are similar to those of novel classes, classifying novel classes may disrupt the classification of the pre-established base classes.
Moreover, in scenarios with a fixed backbone, base classes typically do not reserve space for unknown novel classes, leading to performance degradation.
Several approaches~\cite{song2023learning,zhou2022forward} have attempted to address this issue by introducing virtual classes to reserve feature space for novel classes. 
However, it remains challenging to predict the feature space of novel classes that significantly deviate from base classes using only data augmentation-based strategies, \textit{e.g.}, mixup.
Additionally, finding a balance between the space reserved for base and novel classes to avoid compromising base classification tasks remains a significant issue.
Previous studies~\cite{peng2022few,zou2022margin,guo2023decision} have attempted to introduce decision boundaries in FSCIL to improve inter-class cohesion. 
However, these methods, which rely on predetermined hyperparameters to set boundaries, often fail to accurately define precise demarcations. 
The aforementioned observations prompt us to construct adaptive decision boundaries to accurately predict base classes and reserve more space for incoming novel classes.

In this paper, we propose a novel \textit{Adaptive Decision Boundary Strategy} (ADBS) for FSCIL, which assigns a specific decision boundary to each class that dynamically adjusts based on the training data.
To enhance the distinctiveness between classes, we also implement a novel inter-class constraint loss that optimizes the decision boundaries and prototypes for each class. 
The proposed ADBS significantly improves class separation and effectively reduces conflicts in the feature space between existing and incoming classes, as shown in Figure~\ref{fig:conflict}(b).
Furthermore, ADBS is a plug-and-play module that can be easily integrated with existing FSCIL frameworks without necessitating modifications to the network architecture.

To summarize, our main contributions are as follows:
\begin{itemize}
\item We introduce a plug-and-play \textit{Adaptive Decision Boundary Strategy} (ADBS) designed to mitigate conflicts between base and novel classes in the feature space within FSCIL tasks.
Our theoretical analysis verifies that this strategy effectively differentiates class centers and optimizes their boundaries.
\item We employ an inter-class constraint to optimize the decision boundaries and prototypes for each class, further enhancing the distinguishability between classes.
\item We evaluate the ABDS method over three FSCIL benchmarks: CIFAR100, \textit{mini}ImageNet, and CUB200.
Experimental results and visualizations demonstrate that incorporating ADBS with existing FSCIL algorithms, including both the baseline and state-of-the-art methods, consistently enhances performance and achieves state-of-the-art performance.
\end{itemize}

\section{Related Work}

\subsection{Few-Shot Class-Incremental Learning}

The concept of Few-Shot Class-Incremental Learning (FSCIL) was first introduced in TOPIC~\cite{tao2020few}, which aims to address the dual challenges of catastrophic forgetting and learning new classes incrementally from a limited number of labeled samples. 
TOPIC tackled these issues by implementing a \textit{neural gas} (NG) network. 
Current methods in FSCIL can be categorized into two main groups: one updates the backbone network during incremental sessions \cite{cheraghian2021semantic,dong2021few,kang2022soft,li2024analogical}, while the others maintain a fixed backbone. 
The latter group attempts to suppress forgetting old knowledge while adapting smoothly to novel classes using various approaches.
Several methods improve model performance by constructing diverse powerful classifiers, such as ETF~\cite{yang2023neural} or stochastic classifiers \cite{kalla2022s3c}.
Other methods~\cite{zhou2022forward,song2023learning} introduce virtual classes to pre-allocate feature space for novel classes, while additional methods include episodic training~\cite{zhou2022few,zhu2021self} or ensemble learning~\cite{ji2023memorizing} to boost the capabilities of backbone.
Recent methods~\cite{yang2021free,liu2023learnable} consider to employ distribution calibration to adjust the classifier.
OrCo~\cite{ahmed2024orco} promotes class separation by leveraging feature orthogonality in the representation space and contrastive learning.

\subsection{Boundary-based Method}

Boundary-based methods, which focus on learning optimal decision boundaries, are extensively employed in various vision tasks such as image classification~\cite{chen2020boundary,wang2024integrated}, semantic segmentation~\cite{liu2022devil}, and domain generalization~\cite{dayal2024madg}. 
The widespread use and success of these methods in diverse applications highlight their efficacy, showcasing their capability to manage complex visual data with exceptional accuracy and reliability.

Several methods also explore boundary strategies within the FSCIL domain.
In prior studies, ALICE~\cite{peng2022few} utilizes angular penalty loss with hyperparameter-defined boundaries to enhance inter-class cohesion, yet these boundaries are not effectively utilized during inference. Conversely, CLOM~\cite{zou2022margin} employs hyperparameter-based methods to establish positive and negative boundaries by considering distances between class prototypes, while DBONet~\cite{guo2023decision} assumes that data feature vectors follow a spherical Gaussian distribution and employs within-class variance to define boundaries. 
Although these approaches promote class separation to some degree, they often struggle to achieve precise boundary accuracy. 
Additionally, traditional boundary-based methods frequently require modifications to the model architecture, which restricts their widespread application. 
To overcome these limitations, we introduce a plug-and-play \textit{Adaptive Decision Boundary Strategy} that captures more accurate boundaries and effectively resolves conflicts between new and old classes in the feature space.

\section{Preliminary}

\subsection{Problem Statement}

In FSCIL, we conduct a series of continuous learning sessions, each featuring a steady stream of training data represented as \(D_{\text{train}} \!=\! \{D^t_{\text{train}}\}_{t=0}^T\).
Each subset \(D^t_{\text{train}} = \{(x_i, y_i)\}_{i=0}^{N_t}\) contains training samples from session \(t\), with \(x_i\) and \(y_i\) denote the \(i\)-th image and its corresponding label, respectively. 
The initial session, termed the base session, provides a substantial amount of training data.
Each subsequent session, referred to as an incremental session, adopts an \(N\)-way, \(K\)-shot setting, which includes \(N\) classes, each with \(K\) samples.
The label space for the $t$-th session is  denoted by $\mathcal{C}^{t}$, which is disjoint between different sessions, \textit{i.e.}, $\mathcal{C}^{t_1} \cap \mathcal{C}^{t_2} = \varnothing$ when  $t_1 \neq t_2$.
The model trained on \(D^t_{\text{train}}\) should be evaluated on \(D^t_{\text{test}}\), which includes all classes encountered up to the \(t\)-th session, represented as \(\bigcup_{i=0}^t \mathcal{C}^{i} \).

\subsection{Base Pretraining and Novel Fine-tuning Strategy}

The Base Classes Pretraining and Novel Classes Fine-tuning (BPNF) strategy \cite{tian2024survey} is a common approach in the FSCIL scenario, which involves initial pre-training on abundant base data, followed by fine-tuning to enhance adaptation to novel classes during the incremental phase.

In general, an FSCIL model is decomposed into two components: a feature extractor $f(\cdot)$ and a dynamic classifier with weights $W$.
The output of the model is represented as:
\begin{equation}
    \phi (x) = W^{\top} f(x),
\label{eq:decouple}
\end{equation}
where \(\phi (x) \in \mathbb{R}^{  \lvert \mathcal{C}\lvert \times 1}\), \(W \in \mathbb{R}^{d \times \lvert \mathcal{C} \lvert}\), and \(f(x) \in \mathbb{R}^{d \times 1}\) with $d$ and $\mathcal{C}$ denotes the output feature dimension and number of classes, respectively.

Specifically, BPNF first leverages sufficient data in the base session to train the model by optimizing the loss for each sample $x$:
\begin{equation}
    \mathcal{L}_{cls}(\phi; x, y) = l(\phi(x), y),
\label{eq:cls}
\end{equation}
where \(l(\cdot, \cdot)\) denotes the cross-entropy loss function.

After the base training phase, $f(\cdot)$ is fixed, and the classifier is expanded in each subsequent incremental session:
$W \!=\! \bigcup_{i=0}^t \{w_1^i, w_2^i, \cdots, w_{\lvert \mathcal{C}^{i} \lvert }^i\}$, where each term is parameterized by the prototype of the corresponding novel class:
\begin{equation}
    w_c^t = \frac{1}{K} \sum_{i=1}^{K} f(x_i).
    \label{eq:prototype}
\end{equation}

In $t$-th session, inference is performed using the Nearest Class Mean (NCM) \cite{mensink2013distance} algorithm to evaluate the accuracy of all encountered classes by predicting the class $\hat{c}_x^t$
with:
\begin{equation}
    \hat{c}_x^t 
    = \argmax_{c,t} \text{ sim } (f(x), w_c^t),
\label{eq:ncm}
\end{equation}
where \( \text{sim}(\mathbf{x}, \mathbf{y}) = \frac{\mathbf{x}^\top \mathbf{y}}{\|\mathbf{x}\| \|\mathbf{y}\|}
 \) represents the cosine similarity between two vectors.

\section{Methodology}

In this section, we provide a detailed description of our proposed methodologies, \textit{Adaptive Decision Boundary} and \textit{Inter-class Constraint}.
The overview of the entire training pipeline is illustrated in Figure~\ref{fig:pipeline}.

\begin{figure*}[!ht]
    \centering
    \includegraphics[width=\linewidth]{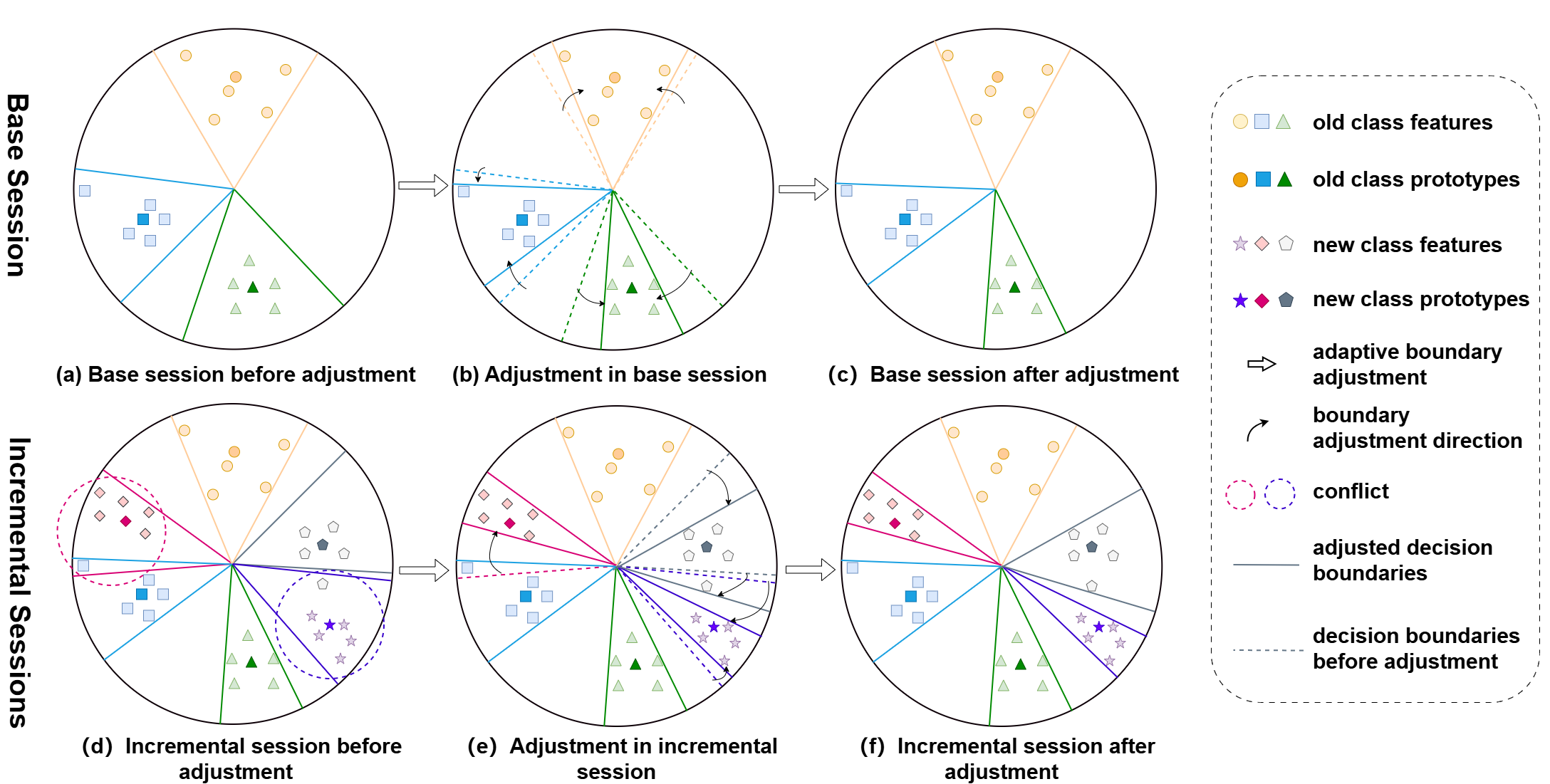}
    \caption{The overall pipeline of our \textit{Adaptive Decision Boundary Strategy} (ADBS).
    In the base session, we compress the decision boundaries of the base classes to reserve feature space for the upcoming new classes, as depicted in (a)-(c).
    Subsequently, in the incremental sessions, while maintaining the boundaries of the base classes, we dynamically adjust the boundaries of the new classes to optimize classification performance and compress the boundaries of current classes to allocate feature space for forthcoming new classes, as shown in (d)-(f).
    Furthermore, we impose Inter-class Constraint (IC) to enhance class distinguishability in each session.
    }
    \label{fig:pipeline}
\end{figure*}

\subsection{Adaptive Decision Boundary}

To more effectively partition the feature space, we propose the \textit{Adaptive Decision Boundary} (ADB). 
This approach involves assigning a unique decision boundary to each class and continuously adapting it throughout the training process.
This strategy conserves the feature space utilized by the base classes, thus liberating additional feature space for the coming new classes.

Previous research typically assigns a unified decision boundary to all classes instead of implementing individual boundaries for each class. 
This common practice usually results in a decision boundary that is determined by the class with the most dispersed inter-class features, which can lead to an excessively large boundary for all classes, as illustrated in Figure~\ref{fig:pipeline} (a). 
This practice can clutter the feature space for new classes, resulting in conflicts when new classes are introduced.
Drawing inspiration from~\cite{zou2022margin}, we establish a specific boundary for each class and further propose adaptive adjustments to these boundaries.

In the base session, we initially assign a decision boundary to each base class, formally defined as: \( M = \{ m_1^0, m_2^0, \dots, m_{ \lvert \mathcal{C}^0 \lvert }^0 \} \), \( M \in \mathbb{R}^{1 \times \lvert \mathcal{C}^0 \lvert } \). We then incorporate these adaptive boundaries into the model as follows:
\begin{equation} 
    \phi (x) = (W \cdot M)^{\top} f(x).
\end{equation}

Throughout the training process, we apply Eq.~\ref{eq:cls} to refine both the original model and the boundaries \(M\), thereby facilitating the adaptive adjustment of \(M\) towards the optimal decision boundary.

Compared to the less precise boundaries determined by hyperparameters in CLOM~\cite{zou2022margin}, our method adaptively adjusts the decision boundaries during training.
Assuming that clear distinctions already exist between base classes without the use of boundaries \(M\), our method allows the boundaries of classes with smaller feature spaces to adaptively and substantially contract, guided by the loss function described in Eq.~\ref{eq:cls}. 
Concurrently, the boundaries of other classes will also contract to a degree, as illustrated in Figure \ref{fig:pipeline} (b). 
This strategy liberates additional space for new classes, effectively reducing the conflicts between existing and incoming classes within the feature space.

In incremental sessions, previous methods typically employ only Eq.~\ref{eq:prototype} to compute prototypes without adjusting boundaries.
As a result of continuing to use the static boundaries of the old classes, the new classes fail to adapt to their own distribution, leading to suboptimal classification performance and catastrophic forgetting, as illustrated in Figure~\ref{fig:pipeline} (d).
Therefore, we resolve to continuously fine-tune the boundaries for new classes during these sessions.

In each incremental session \(t\), following the BPNF strategy, we first update the classifier using prototypes of the new classes.
Next, we adjust the decision boundaries while keeping the backbone network and the classifier fixed. 
To accomplish this, we first expand \(M\) to  \( M = \{ m_1^0, m_2^0, \cdots, m_{ \lvert \mathcal{C}^0 \lvert }^0 \} \cup \cdots \cup \{ m_1^t, m_2^t, \cdots, m_{ \lvert \mathcal{C}^t \lvert }^t \}  \).
Subsequently, we calculate the mean boundary of all old classes to set the initial boundaries for the new classes in the current session. 
Concurrently, the boundaries of all old classes remain unchanged, while the boundaries for new classes are specifically adjusted using the Eq.~\ref{eq:cls}.

By fine-tuning the boundaries of new classes, we are able to improve the classification performance of these classes while preserving the knowledge of the old classes, as shown in Figure~\ref{fig:pipeline} (f).
During inference, we utilize the previously trained boundaries. 
Eq.~\ref{eq:ncm} is reformulated as follows:
\begin{equation}
    \hat{c}_x^t = \argmax_{c,t} m_c^t \text{ sim } (f(x), w_c^t).
\end{equation}

\begin{table*}[ht]
    \centering
    \small
    \setlength{\tabcolsep}{1mm}
    \begin{tabular}
    {lcccccccccccc}
        \toprule
        \multirow{2}{*}{Methods} & \multicolumn{9}{c}{Accuracy in each session (\%)} & \multirow{2}{*}{Average} 
        \\
        \cmidrule(lr){2-10} 
        & 0 & 1 & 2 & 3 & 4 & 5 & 6 & 7 & 8 & Acc. 
        \\
        \midrule
Topic {\cite{tao2020few}} & {64.10} & {55.88} & {47.07} & {45.16} & {40.11} & {36.38} & {33.96} & {31.55} & {29.37} & {42.62 }  \\
CEC {\cite{zhang2021few}} & {73.07} & {68.88} & {65.26} & {61.19} & {58.09} & {55.57} & {53.22} & {51.34} & {49.14} & {59.53 }  \\
FACT {\cite{zhou2022forward}} & {74.60} & {72.09} & {67.56} & {63.52} & {61.38} & {58.36} & {56.28} & {54.24} & {52.10} & {62.24 }  \\
C-FSCIL {\cite{hersche2022constrained}} & {77.47} & {72.40} & {67.47} & {63.25} & {59.84} & {56.95} & {54.42} & {52.47} & {50.47} & {61.64 }  \\
CLOM† {\cite{zou2022margin}} & {74.20} & {69.83} & {66.17} & {62.39} & {59.26} & {56.48} & {54.36} & {52.16} & {50.25} & {60.57 }  \\
DBONet† {\cite{guo2023decision}} & {77.81} & {73.62} & {71.04} & {66.29} & {63.52} & {61.01} & {58.37} & {56.89} & {55.78} & {64.93 }  \\
MCNet {\cite{ji2023memorizing}} & {73.30} & {69.34} & {65.72} & {61.70} & {58.75} & {56.44} & {54.59} & {53.01} & {50.72} & {60.40 }  \\
SoftNet {\cite{kang2022soft}} & {72.62} & {67.31} & {63.05} & {59.39} & {56.00} & {53.23} & {51.06} & {48.83} & {46.63} & {57.57 }  \\
WaRP {\cite{kim2023warping}} & {80.31} & {75.86} & {71.87} & {67.58} & {64.39} & {61.34} & {59.15} & {57.10} & {54.74} & {65.82 }  \\
TEEN {\cite{wang2024few}} & {74.92} & {72.65} & {68.74} & {65.01} & {62.01} & {59.29} & {57.90} & {54.76} & {52.64} & {63.10 }  \\
NC-FSCIL {\cite{yang2023neural}} & {82.52} & {76.82} & {73.34} & {69.68} & {66.19} & {62.85} & {60.96} & {59.02} & {56.11} & {67.50 }  \\
ALFSCIL {\cite{li2024analogical}} & {80.75} & {77.88} & {72.94} & {68.79} & {65.33} & {62.15} & {60.02} & {57.68} & {55.17} & {66.75 }  \\
DyCR {\cite{pan2024dycr}} & {75.73} & {73.29} & {68.71} & {64.80} & {62.11} & {59.25} & {56.70} & {54.56} & {52.24} & {63.04 }  \\
\midrule
baseline & {73.92} & {67.14} & {63.71} & {60.07} & {57.10} & {54.85} & {52.52} & {50.49} & {48.60} & {58.71 }  \\
baseline+ADBS & \textbf{79.93} & \textbf{75.22} & \textbf{71.11} & \textbf{65.99} & \textbf{62.46} & \textbf{58.38} & \textbf{55.96} & \textbf{53.72} & \textbf{51.15} & \textbf{63.77 }  \\
  & \textbf{(+6.02)} & \textbf{(+8.08)} & \textbf{(+7.40)} & \textbf{(+5.92)} & \textbf{(+5.36)} & \textbf{(+3.53)} & \textbf{(+3.43)} & \textbf{(+3.22)} & \textbf{(+2.55)} & \textbf{(+5.06)  } \\
\midrule
OrCo* {\cite{ahmed2024orco}} & {79.77} & {63.29} & {62.39} & {60.13} & {58.76} & {56.56} & {55.49} & {54.19} & {51.12} & {60.19 }  \\
OrCo+ADBS & {79.77} & \textbf{63.46} & {61.89} & \textbf{60.43} & \textbf{59.23} & {56.32} & \textbf{55.76} & \textbf{54.48} & \textbf{51.54} 
& \textbf{60.32 }  \\
  & {(+0.00)} & \textbf{(+0.17)} & {(-0.50)} & \textbf{(+0.29)} & \textbf{(+0.46)} & {(-0.25)} & \textbf{(+0.27)} & \textbf{(+0.30)} & \textbf{(+0.42)} & \textbf{(+0.13)  } \\
\midrule
ALICE†* {\cite{peng2022few}} & {80.37} & {72.34} & {67.67} & {63.61} & {61.11} & {58.53} & {57.40} & {55.43} & {53.46} & {63.32 }  \\
ALICE+ADBS & {80.12} & \textbf{74.11} & \textbf{70.51} & \textbf{66.72} & \textbf{63.90} & \textbf{61.25} & \textbf{60.00} & \textbf{58.07} & \textbf{56.00} & \textbf{65.63 }  \\
  & {(-0.25)} & \textbf{(+1.77)} & \textbf{(+2.84)} & \textbf{(+3.11)} & \textbf{(+2.79)} & \textbf{(+2.72)} & \textbf{(+2.60)} & \textbf{(+2.64)} & \textbf{(+2.54)} & \textbf{(+2.31)  } \\
\midrule
SAVC* {\cite{song2023learning}} & {78.60} & {72.95} & {68.73} & {64.59} & {61.41} & {58.46} & {56.29} & {54.40} & {52.19} & {63.07 }  \\
SAVC+ADBS & \textbf{85.13} & \textbf{80.39} & \textbf{77.07} & \textbf{72.61} & \textbf{69.54} & \textbf{66.54} & \textbf{64.70} & \textbf{62.72} & \textbf{60.60} 
& \textbf{71.03 }  \\
  & \textbf{(+6.53)} & \textbf{(+7.43)} & \textbf{(+8.34)} & \textbf{(+8.03)} & \textbf{(+8.13)} & \textbf{(+8.08)} & \textbf{(+8.41)} & \textbf{(+8.32)} & \textbf{(+8.41)} & \textbf{(+7.96)  } \\
        \bottomrule
    \end{tabular}
    \caption{Comparison with SOTA methods on CIFAR100 for FSCIL. †: Boundary-based method. *: Reproduced results.}
    \label{tab:cifar100}
\end{table*}

\subsection{Inter-class Constraint}

With the help of adaptive decision boundaries, we can adjust the decision space of existing classes.
To further enhance the distinguishability between classes, the decision boundary $M$ should satisfy the following proposition:
\begin{proposition}
\label{propos1}
Given a classifier with weights $W$, the adaptive boundary strategy could help to better separate classes $i,j$ if the boundary weights $m_i$ and $m_j$ satisfy the following equation:
\begin{equation}
    \quad (1 - m_i)p_i^\top w_i + (m_j - 1)p_i^\top w_j \leq 0, \quad \forall i \neq j, 
    \label{eq:constraint}
\end{equation}
where $p_i$ denotes the prototype of class $i$. $w_i$ and $w_j$ correspond to the weights of classes $i$ and $j$ in the classifier, respectively.
\begin{proof}
Please refer to the \textbf{supplemental materials}.
\end{proof}
\end{proposition}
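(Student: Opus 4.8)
The plan is to read the phrase ``better separate classes $i,j$'' as the precise requirement that switching on the adaptive boundaries does not shrink the classification margin that the prototype $p_i$ enjoys over a competing class $j$. First I would record how the boundaries enter the scores. Because the model computes $\phi(x) = (W\cdot M)^{\top} f(x)$, the net effect is to rescale each classifier column $w_c$ by its boundary $m_c$, so the score assigned to $p_i$ on any class $c$ is $m_c\,p_i^{\top}w_c$. Evaluating this at the true class $i$ and at the competitor $j$ produces the boundary-adjusted margin
\begin{equation*}
    \Delta_{ij}^{M} = m_i\,p_i^{\top}w_i - m_j\,p_i^{\top}w_j ,
\end{equation*}
while the unmodified classifier (all boundaries equal to one) gives the baseline margin $\Delta_{ij} = p_i^{\top}w_i - p_i^{\top}w_j$.

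Next I would formalize ``better separation'' as the condition $\Delta_{ij}^{M} \ge \Delta_{ij}$ holding for every pair $i\neq j$, i.e. the adaptive boundaries are asked to enlarge, or at worst preserve, the inter-class margin relative to the plain classifier. Substituting the two expressions and grouping the coefficients of $p_i^{\top}w_i$ and $p_i^{\top}w_j$ collapses this inequality to $(m_i-1)p_i^{\top}w_i - (m_j-1)p_i^{\top}w_j \ge 0$, and multiplying through by $-1$ reverses the inequality to give exactly $(1-m_i)p_i^{\top}w_i + (m_j-1)p_i^{\top}w_j \le 0$, which is the stated constraint. This establishes that the proposition's inequality is equivalent to the margin-nondecrease condition, pinning down the sense in which the boundaries ``help to better separate'' the two classes.

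Since the heart of the argument is a single rearrangement once the right margin is identified, I do not anticipate any computational obstacle. The genuine subtlety, and the step I would treat most carefully, is justifying that the raw inner products $p_i^{\top}w_c$ are the correct quantities to compare: the inference rule in Eq.~\ref{eq:ncm} is phrased with cosine similarity and a boundary factor, not bare dot products. I would therefore state explicitly the assumption that the features and classifier weights are $\ell_2$-normalized (or that the normalizing constants are absorbed into $f$ and the $w_c$), so that $\mathrm{sim}(p_i,w_c)$ agrees with $p_i^{\top}w_c$ up to a common positive factor that leaves every inequality above unchanged. With that normalization fixed, the remaining algebra closes the proof at once.
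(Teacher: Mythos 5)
Your proof is correct, and it reaches the paper's inequality by a genuinely different (and more elementary) route. The paper does not compare linear margins: it compares softmax probabilities, requiring $P_{adb}(x\in i)\ge P_{fdb}(x\in i)$ where $P_{fdb}$ uses scores $\langle w_c,f(x)\rangle$ and $P_{adb}$ uses $m_c\langle w_c,f(x)\rangle$. After factoring $e^{(1-m_i)\langle w_i,f(x)\rangle}$ into the competing terms, that probability condition reduces to
\begin{equation*}
\sum_{j\neq i}\Bigl(e^{\,m_j\langle w_j,f(x)\rangle+(1-m_i)\langle w_i,f(x)\rangle}-e^{\,\langle w_j,f(x)\rangle}\Bigr)\le 0,
\end{equation*}
which the paper then enforces term-by-term as a \emph{sufficient} condition, and finally substitutes the explicit hypothesis $f(x)\approx p_i$ to replace features by prototypes. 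Note that the per-term exponent comparison is, after rearrangement, exactly your margin-nondecrease inequality $\Delta_{ij}^{M}\ge\Delta_{ij}$ evaluated at $p_i$, so the two derivations share the same closing algebra; indeed your pairwise condition implies the paper's aggregate probability condition by monotonicity of $e^x$ and summation over $j$. What each approach buys: yours is a one-line rearrangement that connects directly to the inference rule $\hat{c}=\argmax_c m_c\,\mathrm{sim}(f(x),w_c)$, where pairwise margins are precisely what decides predictions; the paper's softmax framing instead ties the constraint to the cross-entropy training objective (boundaries never reduce the correct-class probability) and makes two things explicit that you fold in silently --- that the pairwise inequality is only sufficient for the summed condition, and that the prototype substitution $f(x)\approx p_i$ is an approximation assumption rather than an identity. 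Your attention to normalization is well placed and consistent with the paper, which uses $\langle\cdot,\cdot\rangle$ to denote cosine similarity and takes $p_i$ to be the normalized prototype, so the raw dot products in the final statement already presuppose the convention you propose.
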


Building upon Proposition~\ref{propos1}, we introduce a novel Inter-class Constraint (IC) loss to optimize the boundaries $M$ as:
\begin{equation}
    \mathcal{L}_{IC} = \sum_{i=1}^{N}
    \sum_{j=1}^{N}
    \max(0, (1 - m_i)p_i^\top w_i + (m_j - 1)p_i^\top w_j),
\end{equation}
where $N$ denotes the total number of classes and $p_i$ is the normalized prototype of $i$-th class.

IC loss helps optimize the decision boundaries and prototypes for each class, facilitating further separation of the classes and the establishment of clearer decision boundaries.

The overall objective function is defined as:
\begin{equation}
    \mathcal{L} = \mathcal{L}_{cls} + \alpha \mathcal{L}_{IC},
    \label{eq:total_loss}
\end{equation}
where $\alpha$ denotes the weighting parameters of $\mathcal{L}_{IC}$.

\begin{table*}[ht]
    \centering
    \small
    \setlength{\tabcolsep}{1mm}
    \begin{tabular}{lccccccccccccc}
        \toprule
        \multirow{2}{*}{Methods} & \multicolumn{9}{c}{Accuracy in each session (\%)} & \multirow{2}{*}{Average} 
        \\
        \cmidrule(lr){2-10} 
        & 0 & 1 & 2 & 3 & 4 & 5 & 6 & 7 & 8 & Acc. 
        \\
        \midrule
Topic {\cite{tao2020few}} & {61.31} & {50.09} & {45.17} & {41.13} & {37.48} & {35.52} & {32.19} & {29.46} & {24.42} & {39.64 }  \\
CEC {\cite{zhang2021few}} & {72.00} & {66.83} & {62.97} & {59.43} & {56.70} & {53.73} & {51.19} & {49.24} & {47.63} & {57.75 }  \\
FACT {\cite{zhou2022forward}} & {72.56} & {69.63} & {66.38} & {62.77} & {60.60} & {57.33} & {54.34} & {52.16} & {50.49} & {60.70 }  \\
C-FSCIL {\cite{hersche2022constrained}} & {76.40} & {71.14} & {66.46} & {63.29} & {60.42} & {57.46} & {54.78} & {53.11} & {51.41} & {61.61 }  \\
CLOM† {\cite{zou2022margin}} & {73.08} & {68.09} & {64.16} & {60.41} & {57.41} & {54.29} & {51.54} & {49.37} & {48.00} & {58.48 }  \\
DBONet† {\cite{guo2023decision}} & {74.53} & {71.55} & {68.57} & {65.72} & {63.08} & {60.64} & {57.83} & {55.21} & {53.82} & {63.44 }  \\
MCNet {\cite{ji2023memorizing}} & {72.33} & {67.70} & {63.50} & {60.34} & {57.59} & {54.70} & {52.13} & {50.41} & {49.08} & {58.64 }  \\
SoftNet {\cite{kang2022soft}} & {77.17} & {70.32} & {66.15} & {62.55} & {59.48} & {56.46} & {53.71} & {51.68} & {50.24} & {60.86 }  \\
WaRP {\cite{kim2023warping}} & {72.99} & {68.10} & {64.31} & {61.30} & {58.64} & {56.08} & {53.40} & {51.72} & {50.65} & {59.69 }  \\
TEEN {\cite{wang2024few}} & {73.53} & {70.55} & {66.37} & {63.23} & {60.53} & {57.95} & {55.24} & {53.44} & {52.08} & {61.44 }  \\
NC-FSCIL {\cite{yang2023neural}} & {84.02} & {76.80} & {72.00} & {67.83} & {66.35} & {64.04} & {61.46} & {59.54} & {58.31} & {67.82 }  \\
ALFSCIL {\cite{li2024analogical}} & {81.27} & {75.97} & {70.97} & {66.53} & {63.46} & {59.95} & {56.93} & {54.81} & {53.31} & {64.80 }  \\
DyCR {\cite{pan2024dycr}} & {73.18} & {70.16} & {66.87} & {63.43} & {61.18} & {58.79} & {55.00} & {52.87} & {51.08} & {61.40 }  \\
\midrule
baseline & {74.80} & {68.58} & {64.69} & {61.71} & {59.06} & {56.26} & {54.09} & {52.30} & {50.81} & {60.25 }  \\
baseline+ADBS & \textbf{81.40} & \textbf{75.03} & \textbf{71.03} & \textbf{68.00} & \textbf{65.56} & \textbf{61.87} & \textbf{59.04} & \textbf{56.87} & \textbf{55.38} & \textbf{66.02 }  \\
  & \textbf{(+6.60)} & \textbf{(+6.45)} & \textbf{(+6.34)} & \textbf{(+6.29)} & \textbf{(+6.50)} & \textbf{(+5.61)} & \textbf{(+4.95)} & \textbf{(+4.58)} & \textbf{(+4.57)} & \textbf{(+5.77)  } \\
\midrule
OrCo* {\cite{ahmed2024orco}} & {83.22} & {74.60} & {71.89} & {67.65} & {65.53} & {62.73} & {60.33} & {58.51} & {57.62} & {66.90 }  \\
OrCo+ADBS & \textbf{84.30} & \textbf{78.02} & \textbf{73.96} & \textbf{70.07} & \textbf{67.47} & \textbf{64.65} & \textbf{61.74} & \textbf{59.92} & \textbf{58.42} 
& \textbf{68.73 }  \\
  & \textbf{(+1.08)} & \textbf{(+3.42)} & \textbf{(+2.07)} & \textbf{(+2.41)} & \textbf{(+1.95)} & \textbf{(+1.92)} & \textbf{(+1.41)} & \textbf{(+1.41)} & \textbf{(+0.80)} & \textbf{(+1.83)  } \\
\midrule
ALICE†* {\cite{peng2022few}} & {84.23} & {72.08} & {68.77} & {65.57} & {63.21} & {60.58} & {58.26} & {56.69} & {55.76} & {65.02 }  \\
ALICE+ADBS & {83.92} & \textbf{73.11} & \textbf{69.57} & \textbf{66.03} & \textbf{63.82} & \textbf{61.26} & \textbf{58.64} & \textbf{57.07} & \textbf{56.39} & \textbf{65.53 }  \\
  & {(-0.32)} & \textbf{(+1.03)} & \textbf{(+0.80)} & \textbf{(+0.46)} & \textbf{(+0.61)} & \textbf{(+0.68)} & \textbf{(+0.38)} & \textbf{(+0.38)} & \textbf{(+0.63)} & \textbf{(+0.52)  } \\
\midrule
SAVC* {\cite{song2023learning}} & {81.38} & {76.12} & {71.79} & {68.21} & {65.34} & {61.94} & {59.13} & {57.06} & {55.60} & {66.29 }  \\
SAVC+ADBS & \textbf{87.10} & \textbf{81.11} & \textbf{77.16} & \textbf{74.19} & \textbf{72.05} & \textbf{68.34} & \textbf{65.14} & \textbf{63.39} & \textbf{62.72} 
& \textbf{72.36 }  \\
  & \textbf{(+5.72)} & \textbf{(+4.98)} & \textbf{(+5.37)} & \textbf{(+5.97)} & \textbf{(+6.71)} & \textbf{(+6.40)} & \textbf{(+6.01)} & \textbf{(+6.33)} & \textbf{(+7.12)} & \textbf{(+6.07)  } \\
        \bottomrule
    \end{tabular}
    \caption{Comparison with SOTA methods on miniImageNet for FSCIL. †: Boundary-based method. *: Reproduced results.}
    \label{tab:miniimagenet}
\end{table*}

\begin{figure*}[!htbp]
    \centering
    \includegraphics[width=0.9\linewidth]{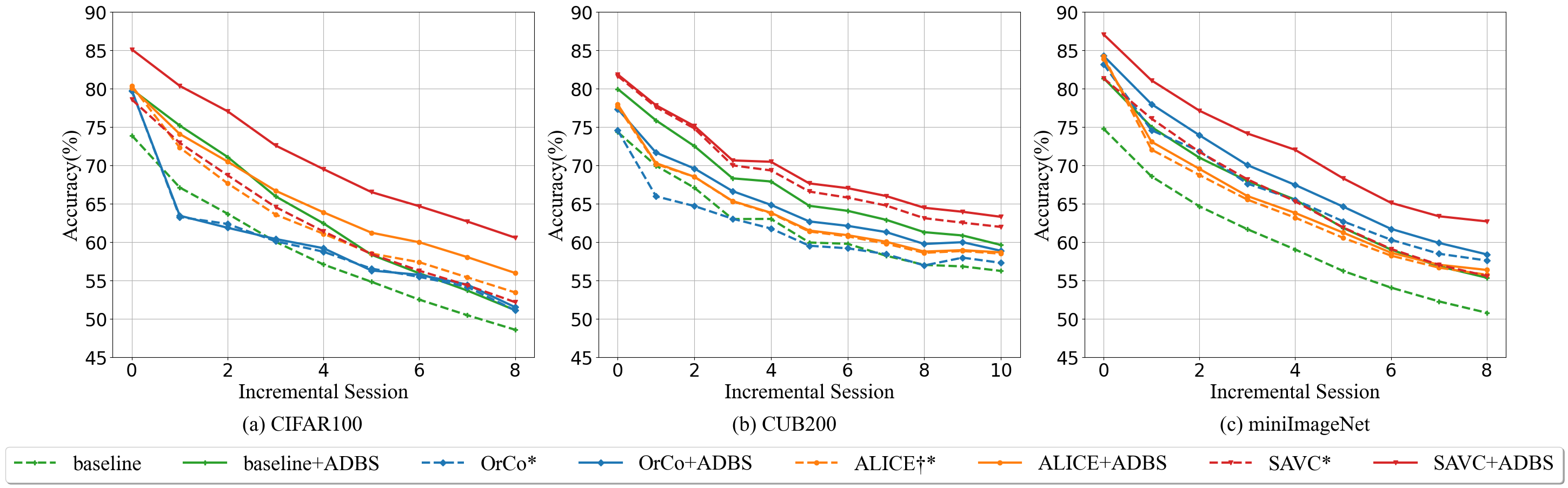}
    \caption{Comparison with different baseline methods on CIFAR100, CUB200, and miniImageNet. The dashed line represents the accuracy of the baseline method, while the solid line depicts the classification performance with our proposed ADBS.}
    \label{fig:baseline_comparison}
\end{figure*}

\begin{table*}[!htbp]
    \centering
     \begin{tabular}{cccccccccccccc}
        \toprule
        \multirow{2}{*}{\textbf{ADB}} & \multirow{2}{*}{\textbf{IC}} & \multicolumn{9}{c}{Accuracy in each session (\%)} & \multirow{2}{*}{$\Delta_{last}$} 
        \\
        \cmidrule(lr){3-11} 
        & & 0 & 1 & 2 & 3 & 4 & 5 & 6 & 7 & 8 &  \\
        \midrule
                    &            & {74.80} & {68.58} & {64.69} & {61.71} & {59.06} & {56.26} & {54.09} & {52.30} & {50.81} & {-}  \\
        \checkmark  &            &{80.13} & {74.74} & {70.21} & {66.51} & {63.67} & {60.62} & {57.92} & {55.85} & {54.67} & {+3.86}  \\
        \checkmark  & \checkmark & \textbf{{81.40}} & \textbf{{75.03}} & \textbf{{71.03}} & \textbf{{68.00}} & \textbf{{65.56}} & \textbf{{61.87}} & \textbf{{59.04}} & \textbf{{56.87}} & \textbf{{55.38}} & \textbf{{+4.57}}  \\
        \bottomrule
    \end{tabular}
    \caption{Ablation studies on miniImageNet benchmark. \textbf{ADB} and \textbf{IC} denote \textit{Adaptive Decision Boundary} and \textit{Inter-class Constraint}, respectively.
    \(\Delta_{last}\): Relative improvements of the last sessions compared to the fixed decision boundary baseline.}
    \label{tab:ablation_miniimagenet}
\end{table*}

\section{Experiment}

\subsection{Experimental Setup}

\textbf{Dataset} Following the benchmark setting~\cite{tao2020few}, we evaluate the effectiveness of our proposed ADBS method on three datasets, \textit{i.e.}, CIFAR100~\cite{krizhevsky2009learning}, miniImageNet~\cite{deng2009imagenet}, and Caltech-UCSD Birds-200-2011 (CUB200)~\cite{wah2011caltech}.

\noindent\textbf{Implementation Details} We integrate our approach into four FSCIL methods: the standard FSCIL baseline, SAVC \cite{song2023learning}, ALICE \cite{peng2022few}, and OrCo \cite{ahmed2024orco}.
We implemente these methods using the official codes released by the authors to ensure a fair comparison.~\footnote{More details about datasets and experimental settings are included in the supplementary materials.}

\noindent\textbf{Evaluation Protocol and Metric} Following previous FSCIL studies~\cite{tao2020few,zhang2021few}, we report the Top-1 accuracy of current classes after the base session and each incremental session.

\subsection{Main Results}

We conduct experimental comparisons on three benchmark datasets~\footnote{The result table of CUB200 dataset can be found in the supplementary materials.}, \textit{i.e.}, CIFAR100, miniImageNet, and CUB200, shown in Tables.~\ref{tab:cifar100},~\ref{tab:miniimagenet}, and Figure~\ref{fig:baseline_comparison}.

Figure~\ref{fig:baseline_comparison} shows that our proposed ADBS consistently enhances the performance of all integrated FSCIL methods across all datasets. 
The results further reveal that integrating SAVC with ADBS (SAVC+ADBS) achieves state-of-the-art performance on all datasets, thereby confirming the effectiveness of the proposed strategy.
Furthermore, we observed that for baseline methods and SAVC, which do not optimize for boundaries, our proposed ADBS continuously compresses the boundaries in each session to allocate feature space, effectively alleviating conflicts between new and old classes in the feature space, thereby significantly enhancing performance.
In contrast, the ALICE and Orco methods each utilize specific losses (angular penalty loss and center orthogonality loss, respectively)
to optimize classification decision boundaries. 
Consequently, integrating ADBS does not yield significant improvements in base classification performance. 
Nevertheless, during incremental sessions, ADBS continues to effectively balance the decision spaces between old and new classes without compromising the classification performance of the combined methods, thereby enhancing the overall model performance.

\subsection{Ablation Study}

We perform ablation studies to investigate the effectiveness of
the key components of the proposed ADBS on the miniImageNet dataset.
Built on the BPNF framework, we initially employed only Cross-Entropy loss as the baseline. 
Subsequently, we progressively integrated the proposed Adaptive Decision Boundary (ADB) and Inter-class Constraint (IC), observing their impacts on performance.
As shown in Table~\ref{tab:ablation_miniimagenet}, each component contributes to performance improvement.
Notably, the ADB strategy significantly enhances classification performance in all sessions, demonstrating that the adaptive strategy effectively adjusts decision boundaries for better space allocation. Additionally, we incorporated an inter-class constraint to increase the distinguishability between classes, which also improved the compactness of embeddings and facilitated FSCIL.~\footnote{More ablation results of CIFAR100 and CUB200 datasets can be found in the supplementary materials.}

\begin{figure}[t]
    \centering
        \includegraphics[width=\linewidth]{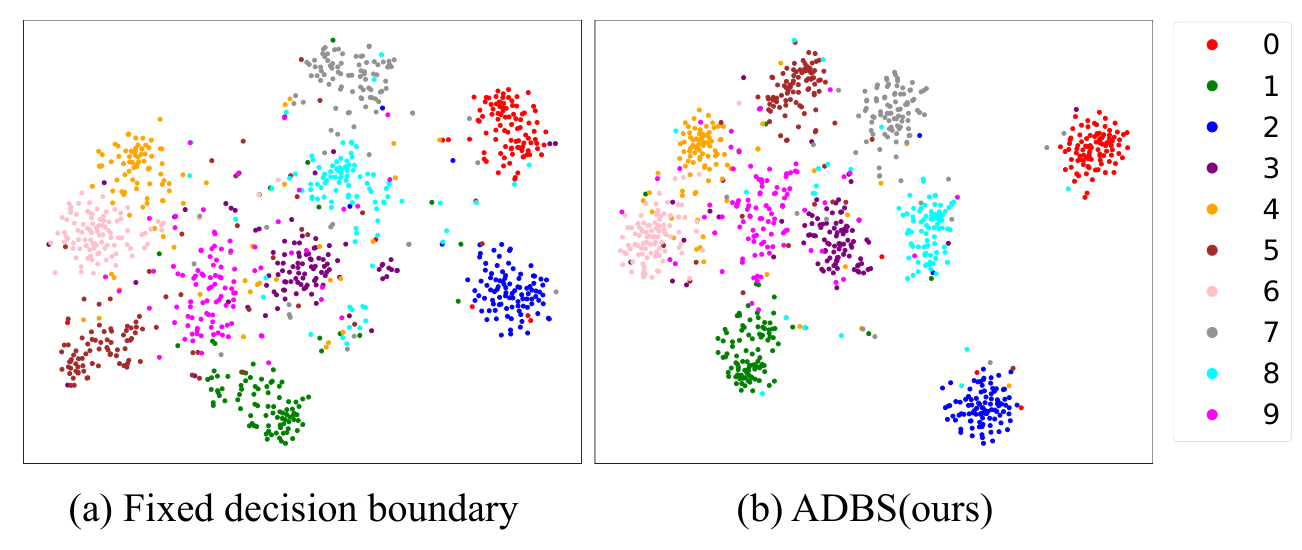}
    \caption{The t-SNE visualization on the miniImageNet dataset of the feature embeddings. Classes 0-5 denote the base classes, whereas classes 6-9 signify the new classes. Our ADBS, as compared to the fixed decision boundary, demonstrates superior class separability.}
    \label{fig:t-sne}
\end{figure}

\begin{figure}[t] 
    \centering
    \includegraphics[width=\linewidth]{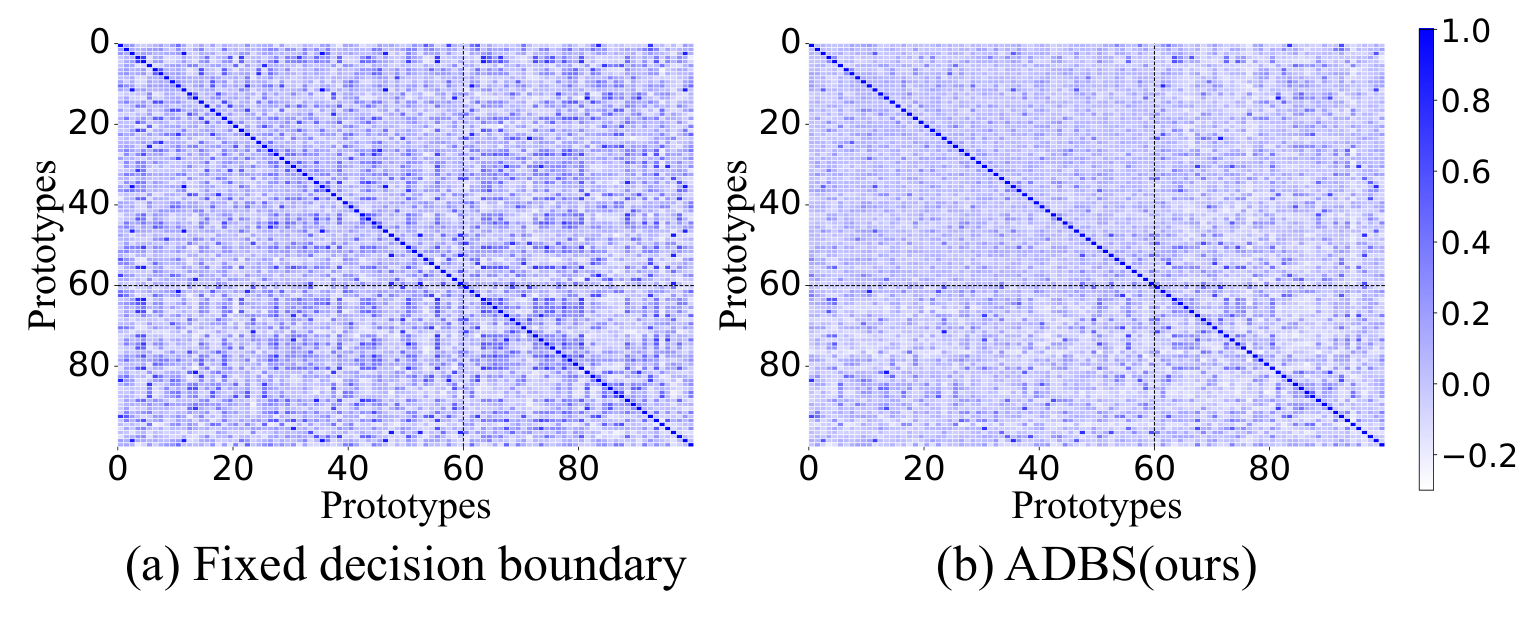}
    \caption{Cosine similarity score between classes on CIFAR100. We use black dashed lines
    to demarcate the regions of base and incremental classes.
    Compared to the fixed decision boundary, our ADBS creates wider spacing between class prototypes, leading to lighter colors in the figure.
    }
    \label{fig:distance}
\end{figure}

\subsection{Further Analysis}

\noindent{\textbf{Visualization of Class Separation}} We employ T-SNE to visualize the feature space of the miniImageNet dataset, as shown in Figure~\ref{fig:t-sne}, with six classes randomly selected from the base classes and four from the new classes.
As shown in Figure~\ref{fig:t-sne} (a), the base classes are densely clustered while the new classes are more scattered, resulting in frequent conflicts within the feature space.
For example, the purple class (class 3) displays conflicts with both the cyan (class 8) and magenta (class 9) new classes.
In contrast, as shown in Figure~\ref{fig:t-sne} (b), our ADBS enhances intra-class cohesion and inter-class separation, clarifying the decision boundaries between classes and effectively mitigating conflicts between old and new classes in the feature space.

\begin{table}[t]
    \centering
    \begin{tabular}{cccccccccc}
        \toprule
         &  
        {CUB200} &  {CIFAR100} & {miniImageNet} \\
        \midrule
        baseline & {0.3517} & {0.8846} & {0.8944}  \\
        ADBS & \textbf{0.3584} & \textbf{0.9497} & \textbf{0.9477}  \\
        \midrule
        $\Delta_{impro.}$ & {+0.0067} & {+0.0651} & {+0.0533}  \\
    \bottomrule
    \end{tabular}
    \caption{
    The degree of class separation on three FSCIL benchmarks. $\Delta_{impro.}$: Relative improvements of the overall degree compared to the fixed decision boundary baseline.
    }
    \label{tab:correlation}
\end{table}

\noindent {\textbf{Analysis of the Class Separation Degree}} We first compute the cosine similarity of two class prototypes (consistent with the classifier) to analyze the similarity of the two classes, shown in Figure~\ref{fig:distance}.
We observe that a fixed decision boundary strategy causes class prototypes to cluster closely, obscuring boundaries and leading to classification ambiguities. 
In contrast, our ADBS significantly separates class prototypes, establishing clearer boundaries between classes for more accurate classification.
To further explore our effectiveness in enhancing class separation, we quantified the degree of separation $D_{cs}$ for each method by calculating the average cosine distance between the prototypes of any two classes as:
\begin{equation} 
    D_{cs} = \frac{1}{N^2} \sum_{i=1}^N \sum_{j=1}^N (1-\text{sim}
    (p_i,p_j)),
\end{equation}
where $N$ indicates the number of all classes.
Table~\ref{tab:correlation} presents the results of the baseline and our ADBS on three FSCIL benchmarks.
The results demonstrate that our ADBS effectively separates different classes, mitigating class boundary conflicts and thereby enhancing performance.
It is worth noting that, compared to the other two datasets, CUB200 primarily focuses on fine-grained classification scenarios, where separating different classes is more challenging (\textit{i.e.}, a lower score). 
However, applying our ADBS still achieves a slight improvement, underscoring its generalizability in complex scenarios.

\section{Conclusion}

The conflict between new and old classes within the feature space presents a notable challenge for FSCIL methods. 
In this paper, we propose a plug-and-play \textit{Adaptive Decision Boundary Strategy} (ADBS) designed specifically to mitigate this issue. 
ADBS comprises two key components, \textit{i.e.}, an \textit{Adaptive Decision Boundary} (ADB) and an \textit{Inter-class Constraint} (IC).
ADB assigns a distinct decision boundary to each class and adaptively adjusts it in each session, while the IC is designed to enhance class distinguishability.
Moreover, ADBS offers plug-and-play functionality, enabling seamless integration into various FSCIL methods.
Extensive experiments on three benchmarks demonstrate that incorporating ADBS consistently enhances the performance of existing FSCIL methods, highlighting the effectiveness of our proposed strategy.

\section*{Acknowledgements}

This research was supported by National Natural Science Foundation of China (NSFC) (62376088), Hebei Natural Science Foundation (F2024202047). This research was supported in part by the Wallenberg-NTU Presidential Postdoctoral Fellowship (Award No. 024560-00001).

\bibliography{aaai25}

\clearpage

\setcounter{figure}{0}
\setcounter{table}{0}
\setcounter{section}{0} 
\setcounter{proposition}{0} 
\setcounter{equation}{0}
\setcounter{proof}{0}

\begin{strip}
\centering
\Large \textbf{Supplementary Material}
\end{strip}
 
\section{Introduction}

In this supplementary material, we provide a detailed proof of Proposition 1 in Section A. 
Section B presents comprehensive information about the three datasets and experimental settings. 
Furthermore, Section C contains more experimental results, as well as the pseudo-code of our method in incremental sessions.

\section{A. Proof of the Proposition 1}
\label{sec:proof}

We first recall the Proposition~\ref{propos1} as follows and then give the proof.
\begin{proposition}
\label{propos1}
Given a classifier with weights $W$, the adaptive boundary strategy could help to better separate classes $i,j$ if the boundary weights $m_i$ and $m_j$ satisfy the following equation:
\begin{equation}
    \quad (1 - m_i)p_i^\top w_i + (m_j - 1)p_i^\top w_j \leq 0, \quad \forall i \neq j, 
    \label{eq:constraint}
\end{equation}
where $p_i$ denotes the prototype of class $i$. $w_i$ and $w_j$ correspond to the weights of classes $i$ and $j$ in the classifier, respectively.
\end{proposition}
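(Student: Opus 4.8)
The plan is to turn the informal phrase ``better separate classes $i,j$'' into a precise statement about decision margins, after which the stated inequality falls out by a one-line rearrangement. I would measure separation at the class prototype: a feature equal to $p_i$ is the canonical representative of class $i$ (it is exactly the anchor used by the NCM rule at inference), so the natural quantity to control is the gap between the score assigned to the true class $i$ and the score assigned to a competitor $j$ when the feature is $f(x)=p_i$. Declaring the boundaries ``helpful'' then means they do not shrink this gap relative to the ordinary classifier.

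First I would read off the scores. From the adaptive-boundary model $\phi(x) = (W\cdot M)^\top f(x)$ together with the inference rule $\hat c = \argmax_{c,t} m_c^t\,\text{sim}(f(x),w_c^t)$, the score of class $c$ is $m_c\, w_c^\top f(x)$, which at the prototype becomes $m_c\, p_i^\top w_c$. The standard classifier is recovered by setting every boundary to one, giving score $p_i^\top w_c$. I would then define the two margins of class $i$ against a competitor $j$,
\[
\text{margin}_{\text{ADB}} = m_i\,p_i^\top w_i - m_j\,p_i^\top w_j,\qquad
\text{margin}_{\text{std}} = p_i^\top w_i - p_i^\top w_j,
\]
where the inner products are cosine similarities under the normalization of $p_i$ and $w_c$ used throughout.

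The key step is to impose that the boundaries do not reduce separation, i.e. $\text{margin}_{\text{ADB}} \ge \text{margin}_{\text{std}}$. Expanding and collecting the two inner products yields $(m_i-1)\,p_i^\top w_i - (m_j-1)\,p_i^\top w_j \ge 0$, and multiplying through by $-1$ gives exactly the stated condition $(1-m_i)\,p_i^\top w_i + (m_j-1)\,p_i^\top w_j \le 0$. Since the competitor $j$ was arbitrary, the quantifier $\forall i\neq j$ is immediate, and the left-hand side is precisely the per-pair term that the IC loss clips at zero, so a violation is exactly what the loss penalizes.

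The algebra is routine; the only place a careful reader might push back is the \emph{definition} of separation, so that is where I expect the main effort to go. I would devote most of the write-up to justifying the two modeling choices that make the statement meaningful: evaluating the score at $f(x)=p_i$ (because the prototype is the representative point of the class and the NCM anchor at inference), and comparing against the unit-boundary classifier (the natural baseline relative to which ``better'' is judged). Once these choices are fixed, the proposition reduces to the single rearrangement above, and I would note as a sanity check that when the competitor's boundary $m_j$ is compressed below one while $p_i^\top w_j>0$, the negative term $(m_j-1)p_i^\top w_j$ can indeed drive the sum below zero, matching the intuition that shrinking a rival's boundary widens the margin.
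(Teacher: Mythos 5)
Your proof is correct, and it reaches the stated inequality by a genuinely more direct route than the paper. The paper formalizes ``better separate'' probabilistically: it compares the softmax confidence assigned to the true class under fixed boundaries, $P_{fdb}(x\in i)$, against the adaptive-boundary version $P_{adb}(x\in i)$, demands $P_{adb}\ge P_{fdb}$, reduces this to $\sum_{j\neq i}\bigl(e^{\,m_j\langle w_j,f(x)\rangle+(1-m_i)\langle w_i,f(x)\rangle}-e^{\langle w_j,f(x)\rangle}\bigr)\le 0$, and then takes the \emph{sufficient} per-term condition that each summand be non-positive; finally it invokes the hypothesis $f(x)\approx p_i$ to replace features by prototypes. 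You instead define separation directly as preservation of the pairwise linear margin at the prototype, $m_i\,p_i^\top w_i - m_j\,p_i^\top w_j \ge p_i^\top w_i - p_i^\top w_j$, and rearrange. These coincide at the key step: by monotonicity of $\exp$, the paper's per-competitor exponent inequality $m_j\langle w_j,f(x)\rangle+(1-m_i)\langle w_i,f(x)\rangle\le\langle w_j,f(x)\rangle$ is, after substituting $f(x)=p_i$, exactly your margin-preservation inequality, so the algebra is identical. What each buys: the paper's framing yields a probabilistic guarantee (softmax confidence on the true class does not decrease for any sample, modulo the $f(x)\approx p_i$ approximation) at the cost of a sufficiency caveat --- the per-pair condition implies but is not implied by the probability-level statement; your framing is more elementary, makes the quantifier $\forall i\neq j$ and the connection to the per-pair IC loss terms immediate, and replaces the paper's approximation hypothesis with a clean definitional choice of evaluating at the NCM anchor. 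Your justification for those two modeling choices (prototype as canonical representative; unit boundaries as baseline) is exactly the right place to spend effort, and your sanity check on compressing $m_j$ matches the paper's intuition.
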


\begin{proof}
For a single image sample \(x\) from class \(i\), under fixed decision boundaries, the probability that \(x\) belongs to class \(i\) is computed as follows:

\begin{equation}
    P_{fdb}(x \in i ) = \frac{e^{ \langle w_i,f(x) \rangle}}{e^{ \langle w_i,f(x) \rangle} + \sum_{i \neq j} e^{ \langle w_j,f(x) \rangle}}, 
\end{equation}
where \(w\) denotes the weight of the classifier, \( f(x) \) represents the feature vector obtained after input \( x \) is processed by the feature extractor,
and \(\langle \cdot, \cdot \rangle\) calculates the cosine similarity, which is less than or equal to 1.

Upon integrating the \textit{Adaptive Decision Boundary Strategy}, we assign a boundary \(m\) for each class, altering the probability for \(x\) belonging to class \(i\) as:

\begin{equation}
\begin{split}
    \,  P_{adb}(x \in i )
    & = \frac{e^{ m_i \langle w_i,f(x) \rangle}}{e^{ m_i \langle w_i,f(x) \rangle} + \sum_{i \neq j} e^{ m_j \langle w_j,f(x) \rangle}} \\
    & = \frac{e^{\langle w_i,f(x) \rangle}}{e^{\langle w_i,f(x) \rangle} + \sum_{i \neq j} e^{ m_j \langle w_j,f(x) \rangle + (1-m_i){\langle w_i,f(x) \rangle}}}. \\
\end{split}
\label{eq:combination}
\end{equation}

To maintain the distinguishability among classes following the addition of boundaries, we enforce these restrictive constraints on \(m\):

\begin{equation}
    \begin{split}
        & \, P_{adb}(x \in i ) \geq P_{fdb}(x \in i ) \\
        & \Longleftrightarrow
        \frac{e^{\langle w_i,f(x) \rangle}}{e^{\langle w_i,f(x) \rangle} + \sum_{i \neq j} e^{ m_j \langle w_j,f(x) \rangle + (1-m_i){\langle w_i,f(x) \rangle}}}  \\
        &\quad\quad \geq \frac{e^{ \langle w_i,f(x) \rangle}}{e^{ \langle w_i,f(x) \rangle} + \sum_{i \neq j} e^{ \langle w_j,f(x) \rangle}} \\
        & \Longleftrightarrow \sum_{i \neq j} (e^{ m_j \langle w_j,f(x) \rangle + (1-m_i){\langle w_i,f(x) \rangle}} - e^{ \langle w_j,f(x) \rangle}) \leq 0 \\
    \end{split}
\end{equation}

Hence, our condition holds as long as each term in the summation remains non-positive:

\begin{equation}
    \begin{split}
        & \, e^{ m_j \langle w_j,f(x) \rangle + (1-m_i){\langle w_i,f(x) \rangle}} - e^{ \langle w_j,f(x) \rangle} \leq 0 \\
        & \Longleftrightarrow m_j \langle w_j,f(x) \rangle + (1-m_i){\langle w_i,f(x) \rangle} - { \langle w_j,f(x) \rangle} \leq 0.       
    \end{split}
\end{equation}

We hypothesize that after model training, the feature vector \(f(x)\) closely aligns with its class prototype, enabling the prototype to approximately represent the sample’s features( \textit{i.e.}, \(f(x) \approx p_c\)). 
Consequently, this alignment facilitates:
\begin{equation}
    \begin{split}
        & \, m_j \langle w_j,p_i \rangle + (1-m_i){\langle w_i,p_i \rangle} - { \langle w_j,p_i \rangle} \leq 0 \\
        & \Longleftrightarrow (1 - m_i)p_i^\top w_i + (m_j - 1)p_i^\top w_j \leq 0.
    \end{split}
\end{equation}

Thus, our inequality is universally valid across all classes, establishing our \textit{Inter-class Constraint} as derived:
\begin{equation}
    \begin{split}
        \quad (1 - m_i)p_i^\top w_i + (m_j - 1)p_i^\top w_j \leq 0, \quad \forall i \neq j.
    \end{split}
\end{equation}
\end{proof}

\begin{table*}[ht]
    \centering
    \setlength{\tabcolsep}{1mm}
    {
    \begin{tabular}{lcccccccccccccc}
        \toprule
        \multirow{2}{*}{Methods} & \multicolumn{11}{c}{Accuracy in each session (\%)} & \multirow{2}{*}{Average} 
        \\
        \cmidrule(lr){2-12} 
        & 0 & 1 & 2 & 3 & 4 & 5 & 6 & 7 & 8 & 9 & 10 & Acc. 
        \\
        \midrule
Topic & {68.68} & {62.49} & {54.81} & {49.99} & {45.25} & {41.40} & {38.35} & {35.36} & {32.22} & {28.31} & {26.28} & {43.92 }  \\
CEC & {75.85} & {71.94} & {68.50} & {63.50} & {62.43} & {58.27} & {57.73} & {55.81} & {54.83} & {53.52} & {52.28} & {61.33 }  \\
FACT & {75.90} & {73.23} & {70.84} & {66.13} & {65.56} & {62.15} & {61.74} & {59.83} & {58.41} & {57.89} & {56.94} & {64.42 }  \\
CLOM† & {79.57} & {76.07} & {72.94} & {69.82} & {67.80} & {65.56} & {63.94} & {62.59} & {60.62} & {60.34} & {59.58} & {67.17 }  \\
DBONet† & {78.66} & {75.53} & {72.72} & {69.45} & {67.21} & {65.15} & {63.03} & {61.77} & {59.77} & {59.01} & {57.42} & {66.34 }  \\      
MCNet & {77.57} & {73.96} & {70.47} & {65.81} & {66.16} & {63.81} & {62.09} & {61.82} & {60.41} & {60.09} & {59.08} & {65.57 }  \\       
SoftNet & {78.11} & {74.51} & {71.14} & {62.27} & {65.14} & {62.27} & {60.77} & {59.03} & {57.13} & {56.77} & {56.28} & {63.95 }  \\
WaRP  & {77.74} & {74.15} & {70.82} & {66.90} & {65.01} & {62.64} & {61.40} & {59.86} & {57.95} & {57.77} & {57.01} & {64.66 }  \\
TEEN  & {77.26} & {76.13} & {72.81} & {68.16} & {67.77} & {64.40} & {63.25} & {62.29} & {61.19} & {60.32} & {59.31} & {66.63 }  \\
NC-FSCIL & {80.45} & {75.98} & {72.30} & {70.28} & {68.17} & {65.16} & {64.43} & {63.25} & {60.66} & {60.01} & {59.44} & {67.28 }  \\      
ALFSCIL & {79.79} & {76.53} & {73.12} & {69.02} & {67.62} & {64.76} & {63.45} & {62.32} & {60.83} & {60.21} & {59.30} & {67.00 }  \\     
DyCR & {77.50} & {74.73} & {71.69} & {67.01} & {66.59} & {63.43} & {62.66} & {61.69} & {60.57} & {59.69} & {58.46} & {65.82 }  \\
\midrule
baseline & {74.37} & {69.98} & {67.11} & {63.02} & {63.04} & {59.97} & {59.80} & {58.25} & {57.05} & {56.85} & {56.26} & {62.34 }  \\
baseline+ADBS & \textbf{79.99} & \textbf{75.89} & \textbf{72.53} & \textbf{68.33} & \textbf{67.92} & \textbf{64.75} & \textbf{64.10} & \textbf{62.93} & \textbf{61.31} & \textbf{60.88} & \textbf{59.65} & \textbf{67.12 }  \\
  & \textbf{(+5.62)} & \textbf{(+5.91)} & \textbf{(+5.43)} & \textbf{(+5.31)} & \textbf{(+4.88)} & \textbf{(+4.78)} & \textbf{(+4.30)} & \textbf{(+4.69)} & \textbf{(+4.26)} & \textbf{(+4.03)} & \textbf{(+3.39)} & \textbf{(+4.78)  } \\
\midrule
OrCo* & {74.58} & {65.99} & {64.72} & {63.06} & {61.79} & {59.55} & {59.21} & {58.46} & {56.97} & {57.99} & {57.32} & {61.79 }  \\
OrCo+ADBS & \textbf{77.37} & \textbf{71.68} & \textbf{69.62} & \textbf{66.66} & \textbf{64.87} & \textbf{62.71} & \textbf{62.14} & \textbf{61.33} & \textbf{59.78} 
& \textbf{60.01} & \textbf{58.89} & \textbf{65.01 }  \\
  & \textbf{(+2.79)} & \textbf{(+5.70)} & \textbf{(+4.90)} & \textbf{(+3.59)} & \textbf{(+3.08)} & \textbf{(+3.17)} & \textbf{(+2.93)} & \textbf{(+2.87)} & \textbf{(+2.80)} & \textbf{(+2.02)} & \textbf{(+1.57)} & \textbf{(+3.22)  } \\
\midrule
ALICE†* & {77.72} & {70.19} & {68.54} & {65.32} & {63.78} & {61.40} & {60.77} & {59.82} & {58.62} & {58.86} & {58.51} & {63.96 }  \\
ALICE+ADBS & \textbf{77.97} & \textbf{70.31} & {68.54} & \textbf{65.37} & \textbf{63.85} & \textbf{61.53} & \textbf{60.92} & \textbf{60.07} & \textbf{58.78} & \textbf{58.97} & \textbf{58.68} & \textbf{64.09 }  \\
  & \textbf{(+0.24)} & \textbf{(+0.12)} & {(+0.00)} & \textbf{(+0.05)} & \textbf{(+0.07)} & \textbf{(+0.13)} & \textbf{(+0.15)} & \textbf{(+0.25)} & \textbf{(+0.16)} & \textbf{(+0.11)} & \textbf{(+0.17)} & \textbf{(+0.13)  } \\
\midrule
SAVC* & {81.68} & {77.61} & {74.84} & {70.02} & {69.36} & {66.61} & {65.82} & {64.80} & {63.14} & {62.55} & {62.00} & {68.95 }  \\       
SAVC+ADBS & \textbf{81.88} & \textbf{77.86} & \textbf{75.15} & \textbf{70.67} & \textbf{70.49} & \textbf{67.67} & \textbf{67.06} & \textbf{66.03} & \textbf{64.50} 
& \textbf{63.97} & \textbf{63.32} & \textbf{69.87 }  \\
  & \textbf{(+0.21)} & \textbf{(+0.25)} & \textbf{(+0.31)} & \textbf{(+0.65)} & \textbf{(+1.13)} & \textbf{(+1.06)} & \textbf{(+1.23)} & \textbf{(+1.24)} & \textbf{(+1.36)} & \textbf{(+1.42)} & \textbf{(+1.32)} & \textbf{(+0.93)  } \\
        \bottomrule
    \end{tabular}
    }
    \caption{Comparison with SOTA methods on CUB200 dataset for FSCIL. {†: Boundary-based method. *: Reproduced results.}}
    \label{tab:cub200}
\end{table*}

\section{B. Datasets and Experimental Settings}
\label{sec:datasets}

\subsection{B.1. Dataset}

Following the protocols established by TOPIC~\cite{tao2020few}, we
evaluate our method across several datasets: CIFAR100~\cite{krizhevsky2009learning}, miniImageNet~\cite{deng2009imagenet}, and CUB200~\cite{wah2011caltech}.
CIFAR100 comprises 100 image classes, each featuring 500 training and 100 test images, with a resolution of 32 × 32 pixels. 
miniImageNet includes 60,000 images distributed among 100 classes, with each class containing 500 training images and 100 test images, at a resolution of 84 × 84 pixels. CUB200, a fine-grained image classification benchmark, consists of 200 bird species, with 5,994 training images and 5,794 test images, each measuring 224 × 224 pixels.

In the FSCIL task, we employ the same experimental setup outlined in previous studies~\cite{tao2020few,song2023learning,peng2022few}. 
For CIFAR100, we utilize 60 base classes and 40 novel classes, with the model adapting to new data in a 5-way 5-shot format over 8 sessions.
Similarly, for miniImageNet, we follow the CIFAR100 configuration, partitioning it into 60 base classes and 40 novel classes. In contrast, CUB200 is segmented into 100 base classes and 100 novel classes, where the model engages in a 10-way 5-shot learning process over 10 sessions.

\begin{table*}[ht]
    \centering
     \begin{tabular}{cccccccccccccc}
        \toprule
        \multirow{2}{*}{\textbf{ADB}} & \multirow{2}{*}{\textbf{IC}} & \multicolumn{9}{c}{Accuracy in each session (\%)} & \multirow{2}{*}{$\Delta_{last}$} 
        \\
        \cmidrule(lr){3-11} 
        & & 0 & 1 & 2 & 3 & 4 & 5 & 6 & 7 & 8 &  \\
        \midrule
                    &            & {73.92} & {67.14} & {63.71} & {60.07} & {57.10} & {54.85} & {52.52} & {50.49} & {48.60} & {-}  \\
        \checkmark  &            & {79.97} & {74.00} & {69.53} & {64.87} & {61.58} & \textbf{{58.46}} & {55.70} & {53.39} & {51.09} & {+2.49}  \\
        \checkmark  & \checkmark & \textbf{{80.32}} & \textbf{{75.22}} & \textbf{{71.11}} & \textbf{{65.99}} & \textbf{{62.46}} & {58.38} & \textbf{{55.96}} & \textbf{{53.72}} & \textbf{{51.15}} & \textbf{{+2.55}}  \\
        \bottomrule
    \end{tabular}
    \caption{Ablation studies on CIFAR100 benchmark. \textbf{ADB} and \textbf{IC} denote \textit{Adaptive Decision Boundary} and \textit{Inter-class Constraint}, respectively.
    \(\Delta_{last}\): Relative improvements of the last sessions compared to the fixed decision boundary baseline.}
    \label{tab:ablation_cifar100}
\end{table*}

\begin{table*}[ht]
    \centering
     \begin{tabular}{cccccccccccccc}
        \toprule
        \multirow{2}{*}{\textbf{ADB}} & \multirow{2}{*}{\textbf{IC}} & \multicolumn{11}{c}{Accuracy in each session (\%)} & \multirow{2}{*}{$\Delta_{last}$} 
        \\
        \cmidrule(lr){3-13} 
        & & 0 & 1 & 2 & 3 & 4 & 5 & 6 & 7 & 8 & 9 & 10 &   \\
        \midrule
                &            & {74.37} & {69.98} & {67.11} & {63.02} & {63.04} & {59.97} & {59.80} & {58.25} & {57.05} & {56.85} & {56.26} & {-}  \\
    \checkmark  &            & \textbf{{79.99}} & {75.08} & {71.60} & {67.45} & {67.30} & {64.18} & {63.24} & {62.03} & {60.45} & {59.88} & {58.94} & {+2.67}  \\
    \checkmark  & \checkmark & \textbf{{79.99}} & \textbf{{75.89}} & \textbf{{72.53}} & \textbf{{68.33}} & \textbf{{67.92}} & \textbf{{64.75}} & \textbf{{64.10}} & \textbf{{62.93}} & \textbf{{61.31}} & \textbf{{60.88}} & \textbf{{59.65}} & \textbf{{+3.39}}  \\
        \bottomrule
    \end{tabular}
    \caption{Ablation studies on CUB200 benchmark. \textbf{ADB} and \textbf{IC} denote \textit{Adaptive Decision Boundary} and \textit{Inter-class Constraint}, respectively.
    \(\Delta_{last}\): Relative improvements of the last sessions compared to the fixed decision boundary baseline.}
    \label{tab:ablation_cub200}
\end{table*}

\subsection{B.2. Implementation Details}
We implement our method using PyTorch, and all experiments are
conducted in the following environment:  Intel(R) Xeon(R) Gold 6342 @2.80GHz CPU, NVIDIA A30 Tensor Core GPU, and Ubuntu 20.04.6 LTS operation system.
Under the above environment settings, we integrate our approach into four methods: the standard FSCIL baseline, SAVC~\cite{song2023learning}, ALICE~\cite{peng2022few}, and OrCo~\cite{ahmed2024orco}.

\noindent\textbf{Standard FSCIL Baseline}: Following~\cite{hersche2022constrained,yang2023neural}, we employ ResNet-18~\cite{he2016deep} as the backbone for CUB200, and ResNet-12 for miniImageNet and CIFAR100.
Standard preprocessing and augmentation techniques such as random resizing, flipping, and color jittering are applied~\cite{tao2020few,zhang2021few}.
We use the SGD optimizer with Nesterov momentum (0.9).
All models have uniform batch sizes of 128 and undergo 120 training epochs for CUB200 and miniImageNet, and 100 epochs for CIFAR100.
The initial learning rates are set at 0.1 for CIFAR100 and miniImageNet, and 0.002 for CUB200, with boundary fine-tuning conducted for up to 10 epochs in each incremental session to prevent overfitting.

\noindent\textbf{SAVC}: 
We replace the backbone with ResNet-12 for CIFAR100 and miniImageNet and simplify the augmentation on miniImageNet from 12 to 2-fold rotation to streamline the training process, as detailed in~\cite{song2023learning}. 
Incremental sessions include up to 10 epochs of boundary fine-tuning.
All other settings are consistent with SAVC.

\noindent\textbf{ALICE}: Boundaries are fine-tuned over up to 10 epochs in incremental sessions. 
We strictly follow the original experimental settings detailed in~\cite{peng2022few} for all datasets, with all other parameters following the ALICE baseline unless specified.

\noindent\textbf{OrCo}: 
We initiate the base session with a pretrained model provided by the original authors. 
For the CUB200 dataset, both the feature extractor and the projection layer are trained for 200 epochs
Conversely, in CIFAR100 and miniImageNet, only the projection layer undergoes training—10 epochs for CIFAR100 and 100 epochs for miniImageNet.
Furthermore, we fine-tune the boundaries for an additional 10 epochs at the end of each session. 
Unless explicitly specified otherwise, we adhere to all of OrCo’s standard settings.

\begin{figure}[!htbp]
    \centering
    \includegraphics[width=\linewidth]{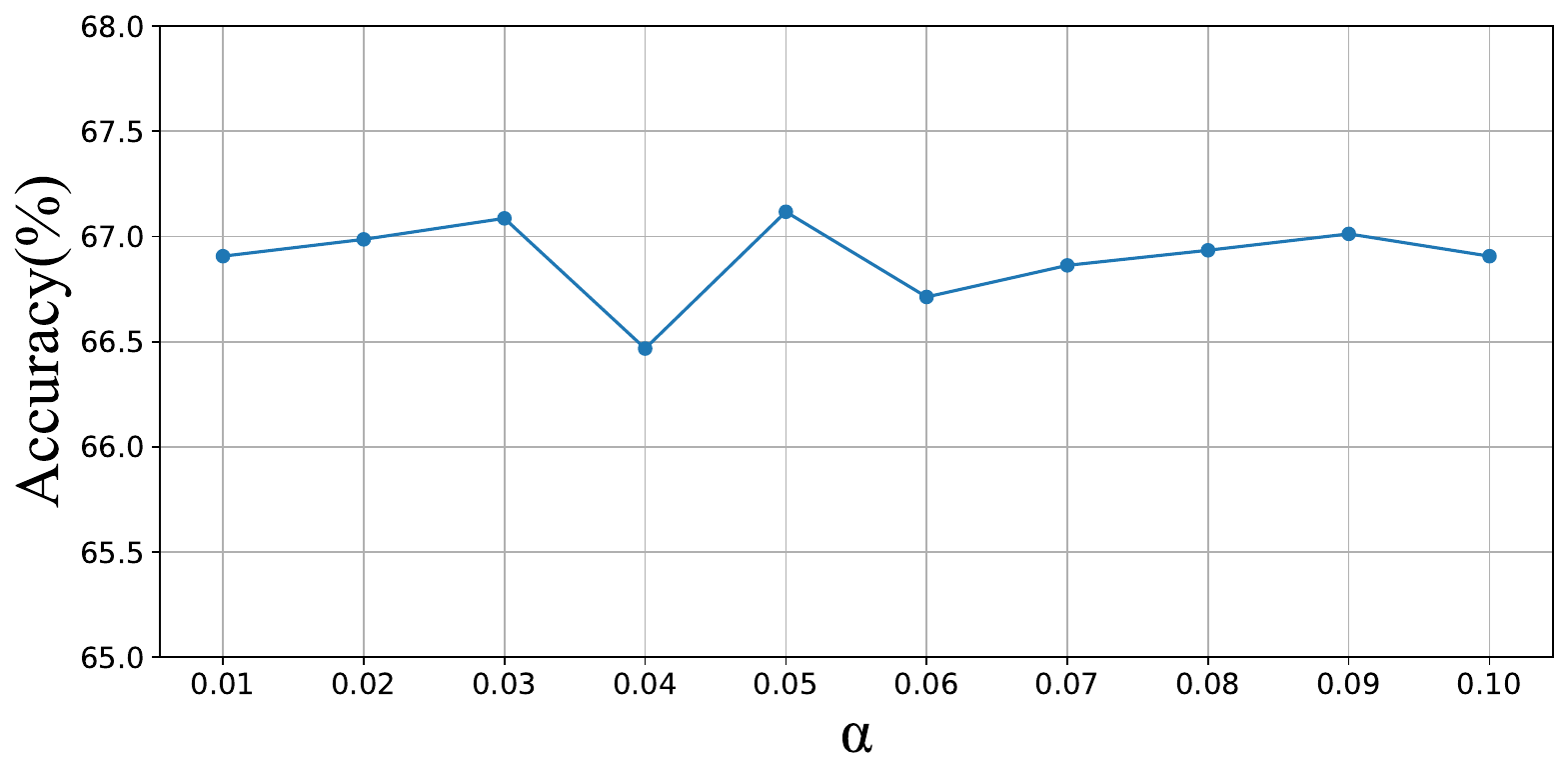}
    \caption{Influence of hyper-parameter $alpha$ on average accuracy of our ADBS integrated with the FSCIL baseline on CUB200 dataset.}
    \label{fig:hyperparas}
\end{figure}

\begin{figure}[!htbp]
    \centering
    \includegraphics[width=\linewidth]{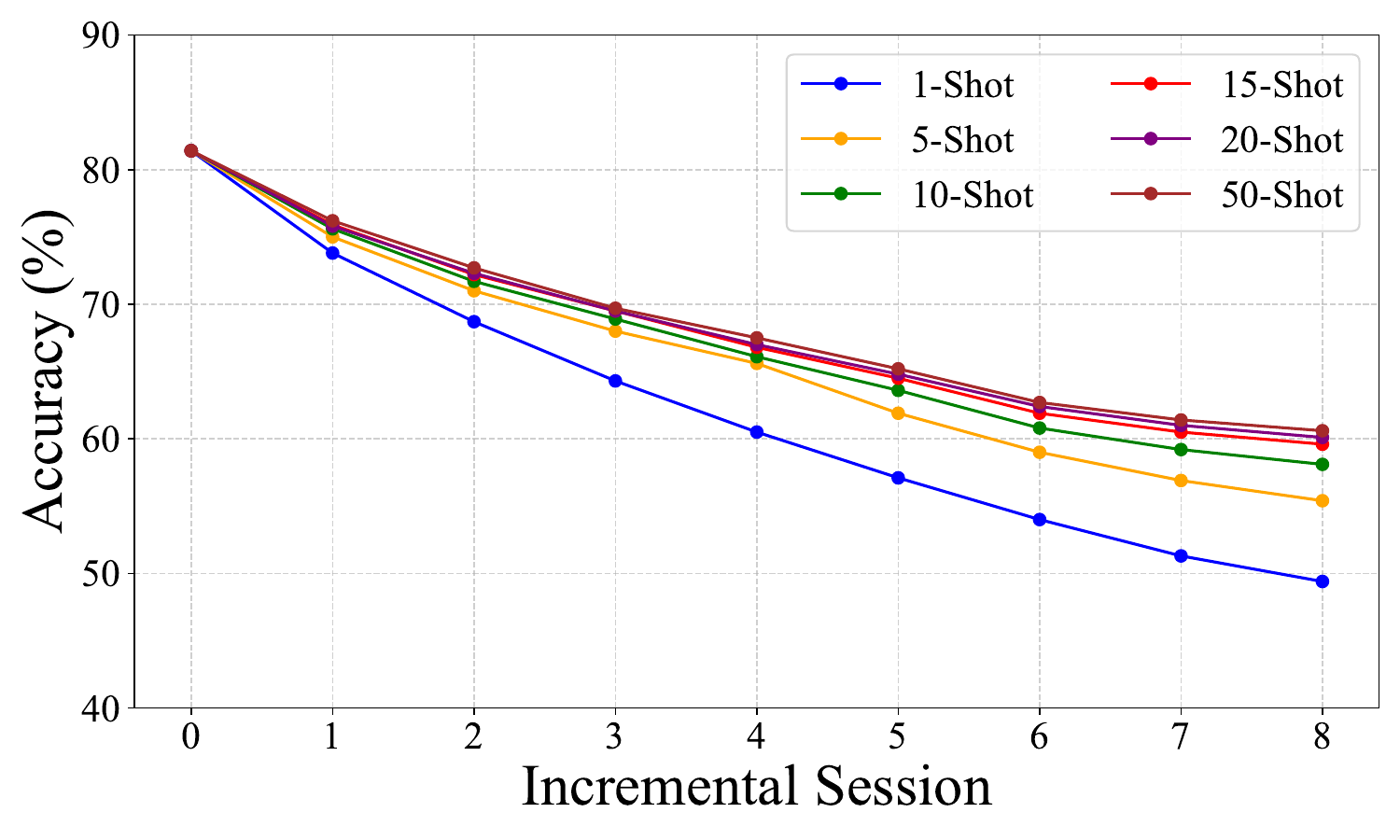}
    \caption{Influence of incremental shots on each session's accuracy for 5-Way FSCIL task on miniImageNet dataset.}
    \label{fig:5-way-k-shot}
\end{figure}

\begin{figure}[!htbp]
    \centering
    \includegraphics[width=\linewidth]{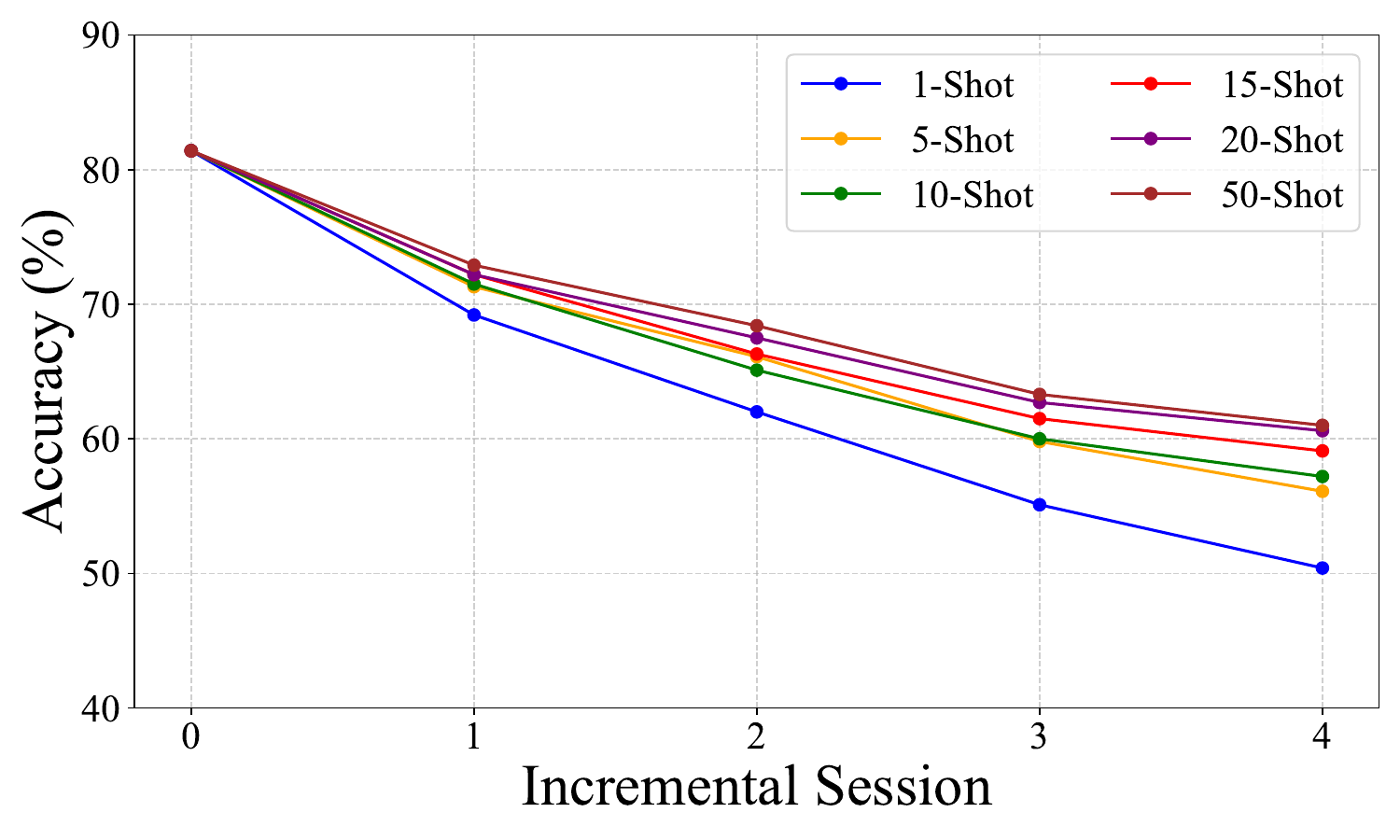}
    \caption{Influence of incremental shots on each session's accuracy for 10-Way FSCIL task on miniImageNet dataset.}
    \label{fig:10-way-k-shot}
\end{figure}

\section{C. More Experimental Results}
\label{sec:results}

\textbf{Main results on CUB200.} We conduct experimental comparison on CUB200 dataset, as shown in Table~\ref{tab:cub200}.
The results demonstrate that our proposed ADBS consistently improves the performance of all integrated FSCIL methods. 
Notably, integrating SAVC with ADBS (SAVC+ADBS) achieves state-of-the-art performance.

\noindent\textbf{More results on ablation study.} We conduct more ablation studies to show the effectiveness of the key components of the proposed ADBS on the CIFRA100 and CUB200 datasets. As shown in Tables~\ref{tab:ablation_cifar100} and~\ref{tab:ablation_cub200}, both ADB and IC improve the classification performance in most sessions.

\noindent\textbf{Analysis of hyper-parameter $\alpha$.}
Our loss function includes a single hyper-parameter, the trade-off parameter $\alpha$, which balances the two terms in the total loss function.
We evaluated the average accuracy of our proposed ADBS when integrated with the FSCIL baseline on the CUB200 dataset by varying $\alpha$ from $0.01$ to $0.1$, as illustrated in Figure~\ref{fig:hyperparas}.
The results indicate that our method is robust to variations in $\alpha$, achieving optimal performance at $\alpha = 0.05$.
Thus, we use the $\alpha = 0.05$ for all experiments in our main experiment.

\noindent\textbf{Experiments on More Practical Settings.}
The standard FSCIL datasets assume a 5-shot setting in incremental sessions.
To explore more practical scenarios, we conducted additional experiments on the miniImageNet dataset, varying the number of classes (5 and 10) and the shot number ( \(K \in \{1,5,10,15,20,50\}\) ).
As shown in Figures~\ref{fig:5-way-k-shot} and~\ref{fig:10-way-k-shot}, the results indicate that the number of classes has minimal impact on performance.
In contrast, increasing the sample size improves accuracy due to enhanced prototype estimation and boundary optimization.
Notably, our method achieves robust performance even with limited samples.

\noindent\textbf{Pseudo-code of ADBS.} The pseudo-code for the incremental training process of our ADBS method is presented in Algorithm~\ref{alg:incremental_training}.

\begin{algorithm}[tb]
\caption{Incremental Training of ADBS}
\label{alg:incremental_training}
\textbf{Input}: Incremental training dataset for session $t$: \(D^t_{\text{train}}\), Feature extractor: \(f\), Classifier weights:  
\(W\), 
Previous decision boundaries:  
\(M\), 
Hyperparameter: \(\alpha\) \\
\textbf{Output}: Updated classifier weights:  
\(W\), 
Updated decision boundaries:  
\(M\)
\begin{algorithmic}[1] 
\STATE Update the classifier by incorporating prototypes of the new classes.
\STATE Expand \(M\) to include decision boundaries for the new classes:
\[
M = \bigcup_{s=0}^{t} \{ m_1^s, m_2^s, \dots, m_{|\mathcal{C}^s|}^s \}.
\]
\STATE Initialize decision boundaries for the new classes based on the mean of all previous class boundaries:
\[
m_i^t = \frac{1}{\sum_{s=0}^{t-1} |\mathcal{C}^s|} \sum_{s=0}^{t-1} \sum_{j=1}^{|\mathcal{C}^s|} m_j^s, \quad 1 \leq i \leq K.
\]
\REPEAT
    \STATE Sample a mini-batch \(B = \{ (x_i, y_i) \}_{i=1}^n \) from \(D^t_{\text{train}}\).
    \STATE Compute the classification loss \(\mathcal{L}_{cls}\).
    \STATE Compute the inter-class constraint loss \(\mathcal{L}_{IC}\).
    \STATE Compute the total loss: \(\mathcal{L} = \mathcal{L}_{cls} + \alpha \mathcal{L}_{IC}\).
    \STATE Update the decision boundaries for the new classes: \(m_i^t, \quad 1 \leq i \leq K\), using gradient descent.
\UNTIL{the predefined number of epochs is reached.}
\end{algorithmic}
\end{algorithm}

\end{document}